\documentclass[conference]{IEEEtran}
\IEEEoverridecommandlockouts
\usepackage{cite}
\usepackage{amsmath,amssymb,amsfonts}

\usepackage{cite}
\usepackage{amsmath,amssymb,amsfonts,amsthm}
\usepackage{algorithmic}
\usepackage{graphicx}
\usepackage{textcomp}
\usepackage{xcolor}
\usepackage{url}
\usepackage{algorithm}
\usepackage{setspace}
\usepackage{multirow}
\usepackage{booktabs}
\usepackage{subfigure}
\usepackage{stfloats}

\newtheorem{theorem}{Theorem}


\usepackage{threeparttable} 
\usepackage{algorithmic}
\usepackage{graphicx}
\usepackage{textcomp}
\usepackage{xcolor}
\def\BibTeX{{\rm B\kern-.05em{\sc i\kern-.025em b}\kern-.08em
    T\kern-.1667em\lower.7ex\hbox{E}\kern-.125emX}}

\begin{document}

\title{Efficient Model-Based Collaborative Filtering with Fast Adaptive PCA\\
\thanks{Identify applicable funding agency here. If none, delete this.}
}

\author{\IEEEauthorblockN{Xiangyun Ding\IEEEauthorrefmark{1}, Wenjian Yu\IEEEauthorrefmark{1}, Yuyang Xie\IEEEauthorrefmark{1}, Shenghua Liu\IEEEauthorrefmark{2}}
\IEEEauthorblockA{\IEEEauthorrefmark{1}Dept. Computer Science \& Tech., BNRist, Tsinghua University, Beijing, China.  \\
\IEEEauthorrefmark{2}Institute of Computing Technology, Chinese Academy of Science, Beijing, China. \\
ding-xy16@mails.tsinghua.edu.cn, yu-wj@tsinghua.edu.cn, xyy18@mails.tsinghua.edu.cn, liushenghua@ict.ac.cn
}
}


\maketitle

\begin{abstract}
A model-based collaborative filtering (CF) approach utilizing fast adaptive randomized singular value decomposition (SVD) is proposed for the matrix completion problem in recommender system. Firstly, a fast adaptive PCA framework is presented which combines the fixed-precision randomized matrix factorization algorithm \cite{yu2018efficient} and accelerating skills for handling large sparse data.
Then, a novel termination mechanism for the adaptive PCA is proposed to automatically determine a number of latent factors for achieving the near optimal prediction accuracy during the subsequent model-based CF. The resulted CF approach has good accuracy while inheriting high runtime efficiency. 
Experiments on real data show that, the proposed adaptive PCA is up to 2.7X and 6.7X faster than the original fixed-precision SVD approach \cite{yu2018efficient} and \texttt{svds} in Matlab repsectively, while preserving accuracy. The proposed model-based CF approach is able to efficiently process the MovieLens data with 20M ratings and exhibits more than 10X speedup over the regularized matrix factorization based approach \cite{koren2009matrix} and the fast singular value thresholding approach \cite{feng2018faster} with comparable or better accuracy. It also owns the advantage of parameter free. Compared with the deep-learning-based CF approach, the proposed approach is much more computationally efficient, with just marginal performance loss.
\end{abstract}

\begin{IEEEkeywords}
Automatic Dimensionality Determination,  Model-Based Collaborative Filtering, Principal Component Analysis, Randomized SVD, Recommender System
\end{IEEEkeywords}

\section{Introduction}
Collaborative filtering (CF) is a key technique used by recommender systems. It makes automatic predictions (filtering) about the interests of a user by collecting perferences or taste information from many users (collaborating). There are different types of CF, such as model-based approaches \cite{koren2009matrix,sarwar2000application,libmf} and deep-learning-based approaches \cite{he2017neural,costco}. The recently developed deep-learning-based CF generalizes traditional matrix factorization models via a non-linear neural architecture and therefore often obtains better results. However, there are a number of issues in these work applying deep learning or neural methods to the recommendation problem \cite{dacrema2019we}. Another concern is that, although the deep-learning-based approaches bring accuracy improvement, they definitely cause substantial energy consumption and are very costly both financially and environmentally \cite{strubell2019energy}.


In this work, we consider the  model-based CF for the matrix completion problem in recommender system. It 
analyzes the relationships between users and items to identify new associations in user-item matrices, where latent factors for users and items are employed to represent the original data matrix. A typical approach includes a preprocessing step in which 
principal component analysis (PCA) is used to learn the low-dimensional representations. The unknown ratings are then predicted with basic collaborative filtering~\cite{ramlatchan2018survey,sarwar2000application}.
Our focus is to develop an automatic and efficient scheme for choosing the dimensionality parameter $k$ in the PCA producing latent factors. Notices that a smaller $k$ causes the inaccuracy of reduced model, while a larger $k$  induces large computational cost and possibly large error due to overfitting.

The proposed model-based CF approach is based on recent progress of randomized SVD research \cite{yu2018efficient}. It outperforms other model-based CF in terms of accuracy and runtime efficiency. Our experiments also reveal that compared with the deep-learning-based CF the proposed approach consumes much less computing resource with just a little inferiority in accuracy.
The major contributions of this work are as follows.
\begin{itemize}
\item[-] By extending the fixed-precision algorithm in \cite{yu2018efficient}, we present a fast adaptive PCA framework which automatically determines the dimensionality parameter $k$, and is accelerated for processing large sparse matrix.

\item[-] 
A CF approach based on SVD model is proposed, which includes a novel termination mechanism to make the adaptive PCA producing a suitable number of latent factors. The proposed approach exhibits high accuracy of rating prediction and runtime efficiency.

\item[-] 
Experiments on real data demonstrate that the adaptive PCA shows 2.7X and 6.7X speedup over the \texttt{randQB\_EI} algorithm \cite{yu2018efficient} and Matlab \texttt{svds} respectively, while preserving  accuracy. The proposed model-based CF approach exhibits better results on predicting ratings than the baselines including the original SVD-model based approach \cite{sarwar2000application} and fast singular value thresholding (SVT) algorithm \cite{feng2018faster}, with much less computational time (up to 33X speedup). Compared with the regularized matrix factorization (RMF) approach \cite{koren2009matrix}, the proposed algorithm is over 10X faster for achieving same accuracy, with the advantage of parameter free. Compared with the deep-learning-based CF, the proposed approach is much more computationally efficient with only 10\% or less drop of prediction accuracy.

\end{itemize}



\section{Preliminaries}
For simplicity, some Matlab conventions for matrix are used to describe algorithms.
\subsection{Singular Value Decomposition and PCA}
The singular value decomposition (SVD) of matrix $\mathbf{A}\in\mathbb{R}^{m\times n}$ is:
\begin{equation}
\mathbf{A}=\mathbf{U\Sigma V}^{\mathrm{T}},
\end{equation}
where $\mathbf{U}=[\mathbf{u}_1, \mathbf{u}_2, \cdots]$ and $\mathbf{V}=[\mathbf{v}_1, \mathbf{v}_2, \cdots]$ are orthogonal matrices which represent the left and right singular vectors, respectively. The diagonal matrix $\mathbf{\Sigma}$ contains the singular values $(\sigma_1, \sigma_2, \cdots)$ of $\mathbf{A}$ in descending order. Suppose that $\mathbf{U}_k$ and $\mathbf{V}_k$ are matrices with the first $k$ columns of $\mathbf{U}$ and $\mathbf{V}$ respectively, and $\mathbf{\Sigma}_k$ is the diagonal matrix containing the first $k$ singular values. The truncated SVD of $\mathbf{A}$ is:
\begin{equation}
\mathbf{A}_k = \mathbf{U}_k\mathbf{\Sigma}_k\mathbf{V}_k^{\mathrm{T}} ~,
\end{equation}
which is the best rank-$k$ approximation of $\mathbf{A}$ in either spectral or Frobenius norm \cite{eckart1936}.

The approximation properties explain the  equivalence between the truncated SVD and PCA. The major usage of PCA is dimensionality reduction. Roughly speaking, the rows in $\mathbf{U}_k$ and columns in $\mathbf{V}_k^\mathrm{T}$ are $k$-dimensional representations of the rows and columns of original $\mathbf{A}$, respectively. 


The built-in function \texttt{svds} in Matlab, which computes the truncated SVD, is based on a Krylov subspace iterative method and is especially efficient for sparse matrix. For a sparse matrix $\mathbf{A}\in\mathbb{R}^{m\times n}$, \texttt{svds} costs  $O(nnz(\mathbf{A})k)$ floating-point operations (\emph{flops}), where $nnz(\cdot)$ means the number of nonzero elements. 

\subsection{The Randomized SVD Algorithm and Fixed-Precision Matrix Factorization}
It has been demonstrated that the randomized methods have advantages for solving the least linear squares problem and low-rank matrix approximation \cite{drineas2016randnla}. The method for low-rank approximation mainly relies on the random projection to identify the subspace capturing the dominant actions of matrix $\mathbf{A}$, which can be realized by multiplying $\mathbf{A}$ with a random matrix on its right or left side to obtain the subspace's orthogonal basis matrix $\mathbf{Q}$. Then, the low-rank approximation  in form of $\mathbf{QB}$ is computed and further results in
 the approximate truncated SVD \cite{Halko2011Finding} (see Algorithm 1). Because $\mathbf{Q}$ has much fewer columns than $\mathbf{A}$, this method reduces the computational time. 
\begin{algorithm}
    \caption{Basic randomized SVD with power iteration}
    \label{alg1}
    \begin{algorithmic}[1]
      \REQUIRE $\mathbf{A}\in\mathbb{R}^{m\times n}$, rank parameter $k$, power parameter $p$
      \ENSURE $\mathbf{U}\in\mathbb{R}^{m\times k}$, $\mathbf{S}\in\mathbb{R}^{k\times k}$, $\mathbf{V}\in\mathbb{R}^{n\times k}$
      \STATE $\mathbf{\Omega} = \mathrm{randn}(n, k+s)$
	   \STATE $\mathbf{Q} = \mathrm{orth}(\mathbf{A}\mathbf{\Omega})$
      \FOR {$i=1, 2, \cdots, p$}
        \STATE $\mathbf{G} = \mathrm{orth}(\mathbf{A}^{\mathrm{T}}\mathbf{Q})$
        \STATE $\mathbf{Q} = \mathrm{orth}(\mathbf{A}\mathbf{G})$
      \ENDFOR
      \STATE $\mathbf{B}  = \mathbf{Q}^{\mathrm{T}}\mathbf{A}$
      \STATE $[\mathbf{U}, \mathbf{S}, \mathbf{V}] = \mathrm{svd}(\mathbf{B})$
      \STATE $\mathbf{U} = \mathbf{Q}\mathbf{U}$
      \STATE $\mathbf{U} = \mathbf{U}(:, 1:k), \mathbf{S} = \mathbf{S}(1:k, 1:k), \mathbf{V} = \mathbf{V}(:, 1:k)$
    \end{algorithmic}
  \end{algorithm}

In Alg. 1, $p$ can be $0, 1, 2, \dots$, $\mathbf{\Omega}$ is a Gaussian random matrix, 
and $s$  is an oversampling parameter (often set 10 or 20) for better accuracy.  ``orth($\cdot$)'' stands for the orthonormalization operation, which can be implemented with a call to a packaged QR factorization, e.g., \texttt{qr(X, 0)} in Matlab.  
After Step 2, the columns of $\mathbf{Q}$ become, approximately, a set of orthonormal basis of the dominant subspace of $range(\mathbf{A})$. Then, Step 7 realizes that  $\mathbf{A}\approx \mathbf{QB}=\mathbf{QQ}^{\mathrm{T}}\mathbf{A}$. By performing the economic SVD on the $(k+s)\times n$ matrix $\mathbf{B}$ the approximate $k$-truncated SVD of $\mathbf{A}$ is obtained. The power iteration (PI) scheme shown as Steps 3$\sim$6 is for improving the approximation accuracy \cite{Halko2011Finding}. It is based on that matrix $(\mathbf{AA}^{\mathrm{T}})^p\mathbf{A}$ has exactly the same singular vectors as $\mathbf{A}$, but its singular values decay more quickly. Therefore, performing the randomized QB procedure on $(\mathbf{AA}^{\mathrm{T}})^p\mathbf{A}$ can achieve better accuracy. The orthonormalization operation  in PI is used to alleviate the round-off error in floating-point computation.

The randomized PCA algorithm with PI has the following guarantee \cite{Halko2011Finding,musco2015}:
\begin{equation}
||\mathbf{A}-\mathbf{QQ}^\mathrm{T}\mathbf{A}||=||\mathbf{A}-\mathbf{USV}^\mathrm{T}||\le (1+\epsilon)||\mathbf{A}-\mathbf{A}_k||,
\label{relerr_qeuation}
\end{equation}
with a high probability. $\mathbf{A}_k$ is the best rank-$k$ approximation of $\mathbf{A}$. 


In \cite{yu2018efficient}, a fixed-precision randomized QB factorization was proposed, where $\mathbf{Q}$ and $\mathbf{B}$ with as small as possible size are sought to ensure $\|\mathbf{A-QB} \|_\mathrm{F} < \varepsilon\|\mathbf{A} \|_\mathrm{F}$. 
It is a variant of the procedure producing the $\mathbf{QB}$ approximation in Alg. 1, including an error indicator measured in Frobenius norm. It also enables more efficient adaptive rank determination for a large and/or sparse matrix, compared with the blocked randQB algorithm proposed in \cite{martinsson2016randomized2}.

\subsection{Collaborative Filtering Approaches}
A typical collaborative filtering (CF) approach in recommendation systems can be described with a matrix completion problem. Given a list of $m$ users $\{u_1,u_2,\cdots,u_m\}$ and $n$ items $\{e_1,e_2,\cdots,e_n\}$ the preferences of users toward the items can be represented as an incomplete $m\times n$ matrix $\mathbf{A}$, where each entry either represents a certain rating or is unknown. The ratings in $\mathbf{A}$ are explicit indications, such as scores given by the users in scales from 1 to 5.

A typical model-based CF is based on SVD~\cite{sarwar2000application}. It first decomposes the rating matrix $\mathbf{A}$ into a user feature matrix  and an item feature matrix with low-dimensional features, and usually can be described as the following steps.
	\begin{itemize}
		\item Perform fill in and normlization operations on the original matrix $\mathbf{A}$ and get $\mathbf{A}_{norm}$;
		\item Perform the truncated SVD on $\mathbf{A}_{norm}$ and get the latent factor matrix for items: $\mathbf{T=\Sigma}^{\frac{1}{2}}\mathbf{V}^\mathrm{T} \in \mathbb{R}^{k\times n}$;
		\item Using the latent factors in $\mathbf{T}$ to calculate similarities among items and then predict the unknown ratings. 
	\end{itemize}

 The fill in operation estimates the missing elements in $\mathbf{A}$ with simple predictions. 
The cosine similarity is usually employed in the final prediction step.
However, the optimal choice of the dimensionality $k$ of latent vector is critical to high quality prediction. It was empirically set in previous works.

Another model-based CF is the regularized marix factorization (RMF) approach~\cite{koren2009matrix}. It maps both users and items to a joint latent factor space of dimensionality $k$, such that we estimate the original matrix $\mathbf{A\approx C^{\mathrm{T}}T}$, where $\mathbf{C}\in\mathbb{R}^{k\times m}$ and $\mathbf{T}\in\mathbb{R}^{k\times n}$. $\mathbf{C}$ consists of $m$ user-related vectors $\mathbf{c}_i$ and $\mathbf{T}$ consists of $n$ item-related vectors $\mathbf{t}_j$. Then, the rating of user $i$ to item $j$ can be expressed as: 
	\begin{equation}
	   \hat{r}_{ij}=\mathbf{c}_i^{\mathrm{T}}\mathbf{t}_j ~.
	\end{equation}
The RMF based CF approach does not involve the PCA or SVD step. 
Instead, it minimizes the regularized squared error on the set of known ratings:
	\begin{equation}
	\label{RMF}
	    \min_{\mathbf{C},\mathbf{T}}\sum_{(i,j)\in S_t}(r_{ij}-\mathbf{c}_i^{\mathrm{T}}\mathbf{t}_j)^2+\lambda(\Vert \mathbf{c}_i\Vert^2+\Vert \mathbf{t}_j\Vert^2)
	\end{equation}
The $S_t$ in (\ref{RMF}) is the set of $(i,j)$ pairs for which $r_{ij}$ is known (in the training set). The constant parameter $\lambda$ is the regularized parameter and is usually determined by cross-validation. Eq. (\ref{RMF}) can be solved with the stochatic gradient descent or alternating least squares (ALS) algorithm.

Another newly proposed approach is the deep-learning based CF\cite{he2017neural}. It can be considered as an improvement of RMF approach, where the inner product is replaced with a neural network architecture learning an arbitrary non-linear function from data.

\section{Fast Adaptive PCA Framework for Processing Large Sparse Data}

The fixed-precision factorization algorithm \texttt{randQB\_EI} in \cite{yu2018efficient} does not include the power iteration. It is straightforward to combine them together to improve the proximity of the factorziation to the optimal solution produced by truncated SVD.
This derives Algorithm 2, where a block Gram-Schmidt procedure is used to realize the orthonormalization in Step 2 of Alg. 1. So, the QB factorization can be constructed in an incremental manner. Due to $||\mathbf{A}-\mathbf{QB}||_\mathrm{F}^2= ||\mathbf{A}||_\mathrm{F}^2-  ||\mathbf{B}||_\mathrm{F}^2$, the approximation error in Frobenious norm can be evaluated easily (by Steps 2 and 13 in Alg. 2) \cite{yu2018efficient}.
\begin{algorithm}
    \caption{Fixed-precision randomized QB factorization with power iteration}
    \label{alg2}
    \begin{algorithmic}[1]
      \REQUIRE $\mathbf{A}\in\mathbb{R}^{m\times n}$, relative error tolerance $\varepsilon \in (0, 1)$, block size $b$, power parameter $p$
      \ENSURE $\mathbf{Q}\in\mathbb{R}^{m\times k}$ and  $\mathbf{B}\in\mathbb{R}^{k\times n}$, such that $\|\mathbf{A-QB} \|_\mathrm{F} < \varepsilon\|\mathbf{A} \|_\mathrm{F} $
      \STATE $\mathbf{Q} = [~ ]$, ~ $\mathbf{B} = [~ ]$
      \STATE $E=  \|\mathbf{A} \|_\mathrm{F}^2 $, ~ $tol= \varepsilon^2E $
      \FOR {$i=1, 2, 3, \cdots$, }
        \STATE $\mathbf{\Omega}_i = \mathrm{rand}(n, b)$
       \STATE $\mathbf{Q}_i = \mathrm{orth}(\mathbf{A\Omega}_i-\mathbf{Q}(\mathbf{B\Omega}_i)$
        \FOR {$j=1,2,\cdots, p$}
            \STATE $\mathbf{G}_i = \mathrm{orth}(\mathbf{A}^\mathrm{T}\mathbf{Q}_i- \mathbf{B}^\mathrm{T}(\mathbf{Q}^\mathrm{T}\mathbf{Q}_i))$
            \STATE $\mathbf{Q}_i = \mathrm{orth}(\mathbf{A}\mathbf{G}_i-\mathbf{Q}(\mathbf{B}\mathbf{G}_i))$
        \ENDFOR
        \STATE $\mathbf{Q}_i = \mathrm{orth}(\mathbf{Q}_i- \mathbf{Q}(\mathbf{Q}^\mathrm{T}\mathbf{Q}_i))$  \quad   \# re-orthogonalization
        \STATE $\mathbf{B}_i=\mathbf{Q}_i^\mathrm{T}\mathbf{A}$
        \STATE $\mathbf{Q} = [\mathbf{Q}, ~\mathbf{Q}_i]$, ~ 
 $\mathbf{B} = \left[ \begin{array}{c}
             \mathbf{B}\\
             \mathbf{B}_i
        \end{array}\right]$
        \STATE $E = E - \|\mathbf{B}_i\|^2_\mathrm{F}$  
        \STATE \textbf{if} $E < tol$ \textbf{then stop}
        
    \ENDFOR

    \end{algorithmic}
  \end{algorithm}

The incremental procedure in Alg. 2 ensures that the dimensionality parameter $k$ is determined adaptively  and efficiently. To make the algorithm more general, we can replace the Frobenious-norm error indicator with other accuracy criteria. This leads to an adaptive PCA framework for more actual scenarios. 

The computational expense of Alg. 2 can be expensive for  large sparse data in real applications. 
In \cite{feng2018fast2}, several skills are presented to accelerate the randomized PCA for sparse matrix.  
A modified power iteration scheme was also proposed to allow odd number of passes over $\mathbf{A}$ and thus more convenient performance trade-off between runtime and accuracy. However, only the fixed dimensionality parameter $k$ was addressed.

We find out that the following  skills in \cite{feng2018fast2} also suit the adaptive PCA framework. 
\begin{itemize}
\item Use the eigen-decomposition for computing the economic SVD of $\mathbf{B}$.
\item Skip one orthonormalization step in each round of power iteration.
\item Use LU factorization to replace the orthonormalization in the power iteration (except the last round of iteration).
\item Use the modified power iteration to allow odd number of passes over $\mathbf{A}$.
\end{itemize}
With them, we modify the inner-loop steps in Alg. 2 and derive  Algorithm 3, which is a fast adaptive PCA framework for processing large sparse data. Notice we only present the algorithm for the situation with $m\le n$. If $m>n$, a similar efficient algorithm can be derived.
\begin{algorithm}[h]
  \caption{Fast adaptive PCA framework for sparse data} 
  \label{rPCA}
  \begin{algorithmic}[1] 
    \REQUIRE sparse matrix $\mathbf{A}\in\mathbb{R}^{m\times n}$~$(m \le n)$, block size $b$, pass parameter $q > 2$
    \ENSURE $\mathbf{U}\in\mathbb{R}^{m\times k}$, $\mathbf{S}\in\mathbb{R}^k$, $\mathbf{V}\in\mathbb{R}^{k\times n}$ for certain accuracy criterion
      \STATE $\mathbf{Q} = [~ ]$, ~ $\mathbf{B} = [~ ]$
      \FOR {$l=1, 2, 3, \cdots$, }
      \IF {$q$ is an even number}
        \STATE $\mathbf{\Omega}=$ randn($n,b$)
        \STATE $\mathbf{Y}=\mathbf{A}\mathbf{\Omega}-\mathbf{Q}(\mathbf{B}\mathbf{\Omega})$
        \STATE $[\mathbf{Q}_l,  \sim]=$lu($\mathbf{Y}$) \quad \quad \quad \quad \quad \quad \quad \# LU factorization 
      \ELSE    
        \STATE $\mathbf{Q}_l=$randn($m,b$) \quad \quad \quad \# when $q$ is an odd number
      \ENDIF
      \FOR {$t=1,2,\cdots,\lfloor\frac{q-1}{2}\rfloor$}
        \IF {$t==\lfloor\frac{q-1}{2}\rfloor$}
          \STATE $\mathbf{R}=\mathbf{A}^\mathrm{T}\mathbf{Q}_l$ ~ \quad \quad \# remove one orthogonalization 
          \STATE $\mathbf{Q}_l= \mathrm{orth}(\mathbf{A}\mathbf{R}-\mathbf{Q}(\mathbf{B}\mathbf{R}))$ \quad \# orthogonalization 
        \ELSE
          \STATE $[\mathbf{Q}_l, \sim] =$lu($\mathbf{A}(\mathbf{A}^\mathrm{T}\mathbf{Q}_l)$) \quad \quad \quad \# LU factorization
        \ENDIF
      \ENDFOR
      \STATE $\mathbf{Q}_l = \mathrm{orth}(\mathbf{Q}_l-\mathbf{Q}(\mathbf{Q}^\mathrm{T}\mathbf{Q}_l))$
      \STATE $\mathbf{B}_i=\mathbf{Q}_l^\mathrm{T}\mathbf{A}$
      \STATE $\mathbf{Q}=\begin{bmatrix}\mathbf{Q}&\mathbf{Q}_l\end{bmatrix}, ~ \mathbf{B}=\begin{bmatrix}\mathbf{B}\\\mathbf{B}_l\end{bmatrix}$
    \IF {termination criterion is met}
    \STATE $k$ is determined and then break
    \ENDIF
    \ENDFOR
  \STATE $[\mathbf{\hat{U}},\mathbf{\hat{S}},\mathbf{\hat{V}}] = \mathrm{eigSVD}(\mathbf{B^\mathrm{T}})$ \quad \quad  \# compute economic SVD
    \STATE $ind=lb:-1:lb-k+1$
  \STATE $\mathbf{U}=\mathbf{Q}\mathbf{\hat{V}}(:, ind),  ~ \mathbf{S}=\mathbf{\hat{S}}(ind), ~ \mathbf{V}=\mathbf{\hat{U}}(:, ind)$
	\end{algorithmic}
\end{algorithm}

This framework allows general termination criterion (set at Step 21) to determine the parameter $k$ and stop the incremental generation of $\mathbf{Q}$. The pass parameter $q$ means the number of passes over $\mathbf{A}$ in the algorithm, which is different from the power parameter $p$ in Alg. 2. 
``eigSVD'' denotes the algorithm using eigen-decomposition to compute economic SVD \cite{feng2018fast2}. 
In Steps 26 and 27 the tail columns of $\mathbf{\hat{U}}, ~ \mathbf{\hat{S}}$ and $\mathbf{\hat{V}}$ are used, because the eigen-decomposition of a symmetric matrix produces
 eigenvalues in ascending order.

The equivalence between the adaptive PCA framework and the incremental iteration steps in the  
fixed-precision QB factorization (Alg. 2) can be established.
\begin{theorem}
Providing that the number of iterations in the outer loop is the same and $q=2p+2$, the $\mathbf{Q}$ and $\mathbf{B}$ obtained from the adaptive PCA framework (Alg. 3) are the same as those obtained from the fixed-precision randomized QB factorization (Alg. 2) in exact arithmetic.
\end{theorem}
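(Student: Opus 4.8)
The plan is to argue by induction on the outer-loop index $l$, showing that after each outer iteration the accumulated factors produced by the two algorithms are equivalent. Since $\mathrm{orth}(\cdot)$ returns an orthonormal basis while $\mathrm{lu}(\cdot)$ returns a nonorthogonal basis of the same column space, the two $\mathbf{Q}$'s cannot be literally identical; the right invariant to track is the column space $\mathrm{range}(\mathbf{Q})$, equivalently the orthogonal projector $\mathbf{Q}\mathbf{Q}^\mathrm{T}$ and the resulting approximation $\mathbf{Q}\mathbf{B}=\mathbf{Q}\mathbf{Q}^\mathrm{T}\mathbf{A}$. I would first state this interpretation precisely: once $\mathrm{range}(\mathbf{Q})$ is shown to agree and both final $\mathbf{Q}$'s are orthonormal, they differ only by right multiplication by an orthogonal matrix $\mathbf{W}$, whence $\mathbf{B}$ transforms as $\mathbf{W}^\mathrm{T}\mathbf{B}$ and $\mathbf{Q}\mathbf{B}$ coincides.

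The key algebraic lemma I would establish is that, for a matrix $\mathbf{X}$ of full column rank, both $\mathrm{orth}(\mathbf{X})$ and the first factor returned by $\mathrm{lu}(\mathbf{X})$ have column space exactly $\mathrm{range}(\mathbf{X})$: the $\mathrm{lu}$ factor satisfies $\mathbf{X}=\mathbf{Q}_l\mathbf{U}$ with $\mathbf{U}$ upper triangular and invertible, so right multiplication by $\mathbf{U}^{-1}$ gives $\mathrm{range}(\mathbf{Q}_l)=\mathrm{range}(\mathbf{X})$. This justifies replacing each intermediate orthonormalization of Alg. 2 by an LU step in Alg. 3 without changing spans in exact arithmetic. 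I would then verify that the correction terms $-\mathbf{Q}(\mathbf{B}\mathbf{\Omega})$, $-\mathbf{Q}(\mathbf{B}\mathbf{R})$, and $-\mathbf{Q}(\mathbf{Q}^\mathrm{T}\mathbf{Q}_l)$ all realize the deflation projector $\mathbf{P}:=\mathbf{I}-\mathbf{Q}\mathbf{Q}^\mathrm{T}$, using the invariant $\mathbf{B}=\mathbf{Q}^\mathrm{T}\mathbf{A}$ maintained across iterations; for instance $\mathbf{A}\mathbf{\Omega}-\mathbf{Q}(\mathbf{B}\mathbf{\Omega})=\mathbf{P}\mathbf{A}\mathbf{\Omega}$.

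Next I would match the iteration counts. Counting applications of $\mathbf{A}$ and $\mathbf{A}^\mathrm{T}$ shows that Alg. 2 with power parameter $p$ performs $1+2p+1=2p+2$ passes, while the even branch of Alg. 3 with $\lfloor (q-1)/2\rfloor=p$ inner steps performs the same number; this is exactly the hypothesis $q=2p+2$ (note $q=2p+2$ is even, so only the even branch of Alg. 3 is active). Tracing a single block then shows each algorithm applies the operator $\mathbf{A}\mathbf{A}^\mathrm{T}$ a total of $p$ times to the deflated start $\mathbf{P}\mathbf{A}\mathbf{\Omega}$, so the new block spans a subspace lying in $\mathrm{range}(\mathbf{P})$ and the final re-orthogonalization $\mathrm{orth}(\mathbf{Q}_l-\mathbf{Q}(\mathbf{Q}^\mathrm{T}\mathbf{Q}_l))$ acts as the identity in exact arithmetic. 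The induction closes because identical $\mathrm{range}(\mathbf{Q})$ after iteration $l-1$ yields the same $\mathbf{P}$, hence the same new block.

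The step I expect to be the main obstacle is reconciling the \emph{pattern} of deflation inside the power iteration. In Alg. 2 the projector $\mathbf{P}$ is applied at every power step, so a block spans $\mathrm{range}\big((\mathbf{P}\mathbf{A}\mathbf{A}^\mathrm{T})^{p}\mathbf{P}\mathbf{A}\mathbf{\Omega}\big)$, whereas the intermediate LU steps of Alg. 3 omit this projection and deflate only at the first step, the last inner step, and the re-orthogonalization, giving $\mathrm{range}\big(\mathbf{P}(\mathbf{A}\mathbf{A}^\mathrm{T})^{p}\mathbf{P}\mathbf{A}\mathbf{\Omega}\big)$. These two subspaces coincide automatically in the first outer iteration (where $\mathbf{P}=\mathbf{I}$) and whenever $p=1$ (where the single projected inner step already carries the deflation), which already covers the smallest admissible case $q=4$; for $p\ge 2$ with a nontrivial accumulated $\mathbf{Q}$ one must argue that the omitted interior projections do not alter the final span. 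I would therefore concentrate the hard part of the write-up here, either by proving $\mathrm{range}\big((\mathbf{P}\mathbf{A}\mathbf{A}^\mathrm{T})^{p}\mathbf{P}\mathbf{X}\big)=\mathrm{range}\big(\mathbf{P}(\mathbf{A}\mathbf{A}^\mathrm{T})^{p}\mathbf{P}\mathbf{X}\big)$ under whatever structural assumptions the construction supplies, or by restricting the equivalence to the regime where the two deflation patterns provably agree.
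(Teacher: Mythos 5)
Your induction framework---tracking $\mathrm{range}(\mathbf{Q})$ (equivalently the projector $\mathbf{Q}\mathbf{Q}^{\mathrm{T}}$ and the product $\mathbf{Q}\mathbf{B}$) as the invariant, together with the lemma that the LU factor of a full-column-rank matrix spans the same column space as $\mathrm{orth}(\cdot)$---is sound, and it is substantially more careful than the paper's own proof. The paper settles the theorem in a few lines by citation: the substitution of LU for $\mathrm{orth}$ is referred to Lemma 3 of \cite{feng2018fast2}, and the merging of two power-iteration steps into one (Step 15 of Alg.~3) is referred to \cite{rsvdpack} and \cite{feng2018fast2} with the phrase ``while preserving accuracy.'' Your plan proves the first point directly and handles the second by noting that, in exact arithmetic, an orthonormalization is merely a change of basis; you also make explicit a point the paper elides, namely that the two outputs can be ``the same'' only in the sense of equal $\mathrm{range}(\mathbf{Q})$ and equal $\mathbf{Q}\mathbf{B}$, not entrywise.

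The obstacle you isolate at the end is genuine, and it is precisely what the paper's citations do not cover: the cited lemmas concern the single-block randomized SVD, where there is no accumulated $\mathbf{Q}$ and hence no deflation terms $-\mathbf{Q}(\mathbf{B}\,\cdot\,)$ to drop; they license skipping orthonormalizations, not skipping projections. Worse, the identity you would need, $\mathrm{range}\bigl((\mathbf{P}\mathbf{K})^{p}\mathbf{P}\mathbf{x}\bigr)=\mathrm{range}\bigl(\mathbf{P}\mathbf{K}^{p}\mathbf{P}\mathbf{x}\bigr)$ with $\mathbf{K}=\mathbf{A}\mathbf{A}^{\mathrm{T}}$, is false in general: expanding gives $(\mathbf{P}\mathbf{K})^{2}\mathbf{P}=\mathbf{P}\mathbf{K}^{2}\mathbf{P}-\mathbf{P}\mathbf{K}\mathbf{Q}\mathbf{Q}^{\mathrm{T}}\mathbf{K}\mathbf{P}$, and the correction term leaves the required span unless $\mathrm{range}(\mathbf{Q})$ is $\mathbf{K}$-invariant. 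A concrete counterexample: $\mathbf{A}=\mathrm{diag}(3,2,1)$, block size $b=1$, $p=2$ (so $q=6$), $\mathbf{\Omega}_{1}=(1,1,1)^{\mathrm{T}}$, $\mathbf{\Omega}_{2}=(1,-1,1)^{\mathrm{T}}$. Both algorithms produce the same first block, spanning $(243,32,1)^{\mathrm{T}}$, but the second blocks span $(\mathbf{P}\mathbf{K})^{2}\mathbf{P}\mathbf{A}\mathbf{\Omega}_{2}\approx(5.17,-39.28,0.96)^{\mathrm{T}}$ and $\mathbf{P}\mathbf{K}^{2}\mathbf{P}\mathbf{A}\mathbf{\Omega}_{2}\approx(5.30,-40.24,0.91)^{\mathrm{T}}$, which are not parallel; hence the accumulated subspaces, and therefore the products $\mathbf{Q}\mathbf{B}$, differ in exact arithmetic. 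So exact equivalence holds only in the regimes you identified ($p\le 1$, i.e.\ $q=4$, or a single outer-loop iteration); for $p\ge 2$ the agreement is only approximate, which is what ``preserving accuracy'' in the cited works, and the paper's experiments, actually reflect. Your instinct to restrict the claim is the correct resolution: the theorem as stated cannot be proved in full generality, and the paper's own proof does not close this gap either.
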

\begin{proof}
When $q=2p+2$, the number of power iteration is the same for the both algorithms. There are two main differences between Alg. 2 and Alg. 3. The first one is at Step 6 of Alg. 3, which is due to Lemma 3 in \cite{feng2018fast2} saying that the ``orth($\cdot$)'' operation in the power iteration can be replaced by LU factorization. It also explains the usage of ``lu($\cdot$)'' in Steps 6 and 15. 
The other is that two steps in the power iteration are combined in a single one at Step 15 of Alg. 3. This is analyzed in \cite{rsvdpack} and \cite{feng2018fast2} showing that the orthonormalization or LU factorization can be performed after every other matrix multiplication while preserving accuracy. So, the two algorithms produce same results.
\end{proof}

Actually, the steps in both algorithms ensure that $\mathbf{Q}$ is an orthonormal matrix and it includes a set of orthonormal basis of subspace $range((\mathbf{AA}^\mathrm{T})^p\mathbf{A\Omega})$. According to Lemma 2 in \cite{feng2018fast2}, the resulted $\mathbf{Q}$ and $\mathbf{B}$ are also the same as those obtained from the basic randomized SVD (Alg. 1).



\section{Model-Based Collaborative Filtering with Automatic Determination of Latent Factors}

In this section, we propose a model-based CF approach which leverages the 
fast adaptive PCA framework in last section and automatically determines a number of latent factors for potentially best prediction accuracy.

We first review the basic CF approach for predicting ratings, which is presented as Algorithm 4. Here we assume the latent factors for items have been obtained. 
\begin{algorithm}
  \caption{The basic CF for predicting $\mathbf{A}(i,j)$} 
  \label{NNT4R}
  \begin{algorithmic}[1] 
    \REQUIRE user-item matrix $\mathbf{A}\in\mathbb{R}^{m\times n}$, latent factor matrix for items $\mathbf{T}\in\mathbb{R}^{k\times n}$ , $i, ~ j$
    \ENSURE predicted rating $\hat{\mathbf{A}}(i,j)$
    \STATE $a=0$, ~ $w=0$
    \FOR {$l=1,2,\cdots,n$}
      \IF {$\mathbf{A}(i,l)$ is a known value}
        \STATE $\gamma = \frac{\mathbf{T}(:,j)^\mathrm{T}\mathbf{T}(:,l)}{\| \mathbf{T}(:,j)\|\cdot\| \mathbf{T}(:,l)\| }$ \quad \quad \quad \quad \quad \# cosine correlation 
        \STATE $a=a+\gamma \mathbf{A}(i,l)$
        \STATE $w=w+\gamma$
      \ENDIF
    \ENDFOR
    \STATE $\hat{\mathbf{A}}(i,j)=a/w$
	\end{algorithmic}
\end{algorithm}
	
To derive the CF approach with the number of latent factors automatically determined, we propose to employ the fast adaptive PCA framework and make it terminated with a suitable accuracy criterion. Firstly, we remove the fill in operation in the original SVD-based CF approach \cite{sarwar2000application}, because it causes large computation of SVD on a dense matrix and possible data distortion. Based on the efficient Alg. 3 for sparse matrix, our approach adapts to very large dataset.
Then, we use a validation set to construct the suitable accuracy criterion for the adaptive PCA. 
With the known ratings in the validation set we can keep checking the prediction error after performing CF. Once the error is small enough or reaches a local minimal, the adaptive PCA terminates and the latent factors are obtained. 
    
For evaluating the accuracy (or error) of rating prediction, there are several kinds of error measure. Without loss of generality, we consider the mean absolute error (MAE),
\begin{equation}
    \mathrm{MAE}=\frac{\sum_{(i,j)\in S}\vert\mathbf{A}(i,j)-\mathbf{\hat{A}}(i,j)\vert}{\vert S\vert} ~ ,
\end{equation}
where $S$ is a set of matrix indices for validation or testing, and $\vert S\vert$ is its cardinality. $\mathbf{A}(i,j)$ and $\mathbf{\hat{A}}(i,j)$ are the true value and predicted value of a matrix entry, respectively.

If MAE is used as error measure, the automatic approach for determining the latent factors is presented as Algorithm 5. 
There, ``$\mathrm{spdiags}$'' denotes the function for constructing a sparse diagonal matrix, and is used to compute the latent factor matrix for items $\mathbf{T}$.
In Alg. 5, $\mathbf{T}$ is successively expanded, each time with  $b$ more latent factors ($b$ is usually set 10 or 20). More latent factors mean larger computational cost in both the PCA stage and the following CF procedure. So, we monitor the prediction error on the validation set, and therefore we can terminate the adaptive PCA in an early time which ensures good accuracy as well.
Finally, the algorithm outputs $\mathbf{T}$ with a suitable number of latent factors.


\begin{algorithm}[htbp]
  \caption{Automatic determination of latent factors for the model-based CF} 
  \label{MFRS}
  \begin{algorithmic}[1] 
    \REQUIRE $\mathbf{A}\in\mathbb{R}^{m\times n}$, validation set $S_v$, block size $b$, pass parameter $q > 2$
    \ENSURE $\mathbf{T}\in\mathbb{R}^{k\times n}$
      \STATE $\mathbf{Q} = [~ ]$, ~ $\mathbf{B} = [~ ]$
      \FOR {$l=1, 2, 3, \cdots$, }
      \STATE Steps 3$\sim$20 of Alg. 3 or its variant for $m>n$
        \STATE $[\mathbf{\hat{U}}, ~ \mathbf{\hat{S}},~ \sim] = \mathrm{eigSVD}(\mathbf{B^\mathrm{T}})$
      \STATE $\mathbf{T}=\mathrm{spdiags}(\mathrm{sqrt}(\hat{\mathbf{S}}),0,bl,bl)\hat{\mathbf{U}}^\mathrm{T}$  \\ ~ \quad \quad \quad \quad \quad \quad \quad \quad \# for $\mathbf{B} \approx\mathbf{U\Sigma V}^\mathrm{T}$,  $\mathbf{T=\Sigma}^{\frac{1}{2}}\mathbf{V}^\mathrm{T} $
      \STATE $\mathrm{MAE}_l=0$
      \FOR {$(i, j)\in S_v$}
        \STATE $\tilde{\mathbf{A}}(i,j)= $ $\mathbf{A}(i,j)$ predicted with Alg. 4 and $\mathbf{T}$  
        \STATE $\mathrm{MAE}_l = \mathrm{MAE}_l + \left|\tilde{\mathbf{A}}(i,j)-\mathbf{A}(i,j)\right|/\left|S_v\right|$
      \ENDFOR
      \STATE \textbf{if} $\mathrm{MAE}_l$ is small enough or the minimal \textbf{then} break 
    \ENDFOR
    \STATE $k=bl, ~ \mathbf{T}= \mathbf{T}(k,:)$ 
	\end{algorithmic}
\end{algorithm}
The computational cost of Steps 7 $\sim$ 10 in Alg. 5 is about $k\cdot nnz(\mathbf{A})\left|S_v\right| /m$, where $k$ is the dimensionality or the row number of $\mathbf{T}$. 
In practice, we use a small portion of the known ratings as the validation set, and $\mathbf{A}$ is very sparse. So, it adds small extra computation to the adaptive PCA procedure (Alg. 3). Since the validation data and the unknown ratings are from same dataset, we expect the obtained latent factors lead to near optimal accuracy on predicting the unknown ratings. This will be validated in the experiment section.

In summary, the proposed model-based CF approach consists of the following major steps.
\begin{itemize}
\item Use Alg. 5 to generate the optimal number of latent factors $\mathbf{T}$ for the sparse user-item matrix $\mathbf{A}$.
\item Use the basic collaborative filtering approach (Alg. 4) to predict unknown ratings.
\end{itemize}

\section{Experimental Results}
The proposed algorithms for generating latent factors (Alg. 3 and 5) have been implemented in Python with Scipy package.
The collaborative filtering algorithm for predicting ratings (Alg. 4) is implemented in C, and invoked by the Python program. The adaptive PCA framework for sparse matrix is compared with \texttt{svds} in Matlab and the \texttt{randQB\_EI} algorithm combined with power iteration \cite{yu2018efficient}, with same truncated $k$. The proposed model-based CF approach with automatic determination of latent factors is compared with the original SVD model based approach \cite{sarwar2000application}, the RMF approach \cite{koren2009matrix}, LibMF \cite{libmf} and the fast SVT matrix completion algorithm \cite{feng2018faster}\footnote{We used its codes shared at https://github.com/XuFengthucs/fSVT}. The SVD model based approach and the RMF approach are implemented in Python by us, while LibMF is the codes shared by its authors \cite{libmfweb}.
We use mean absolute error (MAE) to measure the accuracy of rating prediction.

Unless otherwise stated, the experiments are carried out on a computer with Intel Xeon CPU @2.00 GHz and 128 GB RAM. The CPU times of different algorithms are compared for the fairness. The block size in Alg. 2, Alg. 3 and Alg. 5 are all set as $b=20$.

\subsection{Datasets and Settings}
Four sparse matrices are tested in our experiments. They are from the datasets: MovieLens\footnote{https://grouplens.org/datasets/movielens/} \cite{movielens}, hetrec2011\footnote{https://grouplens.org/datasets/hetrec-2011/} \cite{Cantador:RecSys2011} and BookCrossing\footnote{http://www2.informatik.uni-freiburg.de/\textasciitilde cziegler/BX/} \cite{ziegler2005improving}. 
Two matrices are from MovieLens, with about 100K and 20M ratings respectively. The matrices include the ratings of some users to items (products). The details about the test matrices are listed in Table \ref{datasets}. 

The BookCrossing matrix originally has 105,283 users, 340,556 items (books) and 1,149,780 ratings. However, there are many rows and columns with very few ratings.  In order to make the recommandation more meaningful, we removed rows and columns with less than 2 ratings from the matrix. The resulted matrix reflecting 21,795 users and 48,631 items and including 249,533 ratings is tested.

\begin{table}[ht]
	\centering
	\caption{Details of the tested matrices in our experiments.}
\small{
		\begin{tabular}{@{}c@{~}c@{~}c@{~}c@{~}c@{~}c@{}}  
			\toprule
			Test Matrix &\#Users ($m$) &\#Items ($n$) &\#Ratings& Rating Range\\
			\midrule
			MovieLens-100K&610&9,724&100,836& [0.5, 5]\\
			hetrec2011 &2,113&10,109&855,598& [0.5, 5]\\
			BookCrossing&21,795&48,631&249,533& [0.5, 10]\\
			MovieLens-20M&138,492&26,744&20,000,263 & [0.5, 5]\\
			\bottomrule
		\end{tabular}
}
	\label{datasets}
\end{table}

\subsection{Fast Adaptive PCA Framework for Sparse Data}
In this subsection, we test the proposed fast adaptive PCA framework for sparse data (i.e. Alg. 3) to show its accuracy and efficiency. To ensure sufficient accuracy, the pass parameter in Alg. 3 is set $q=10$. This is equivalent to setting $p=4$ in  Alg. 2. We run them with a given $k$ values as termination criterion
to perform PCA for the test matrices. 
The computed singular values are plotted in Fig. 1, where the accurate results 
from \texttt{svds} are also given. From the figure,
\begin{figure}[h]
  \centering
  \subfigure[MovieLens-100K] {\includegraphics[width=1.7in]{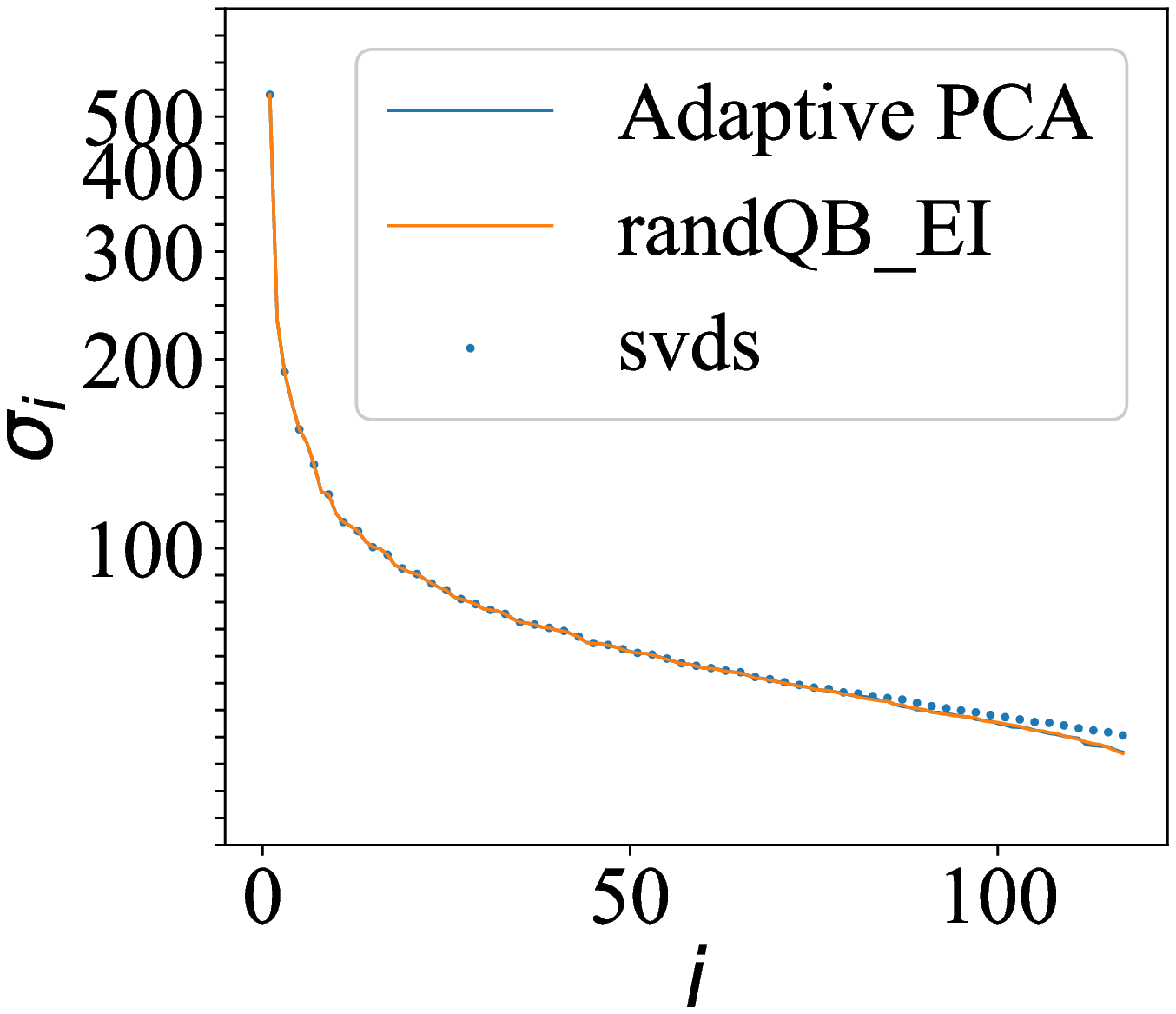}}
  \subfigure[hetrec2011] {\includegraphics[width=1.7in]{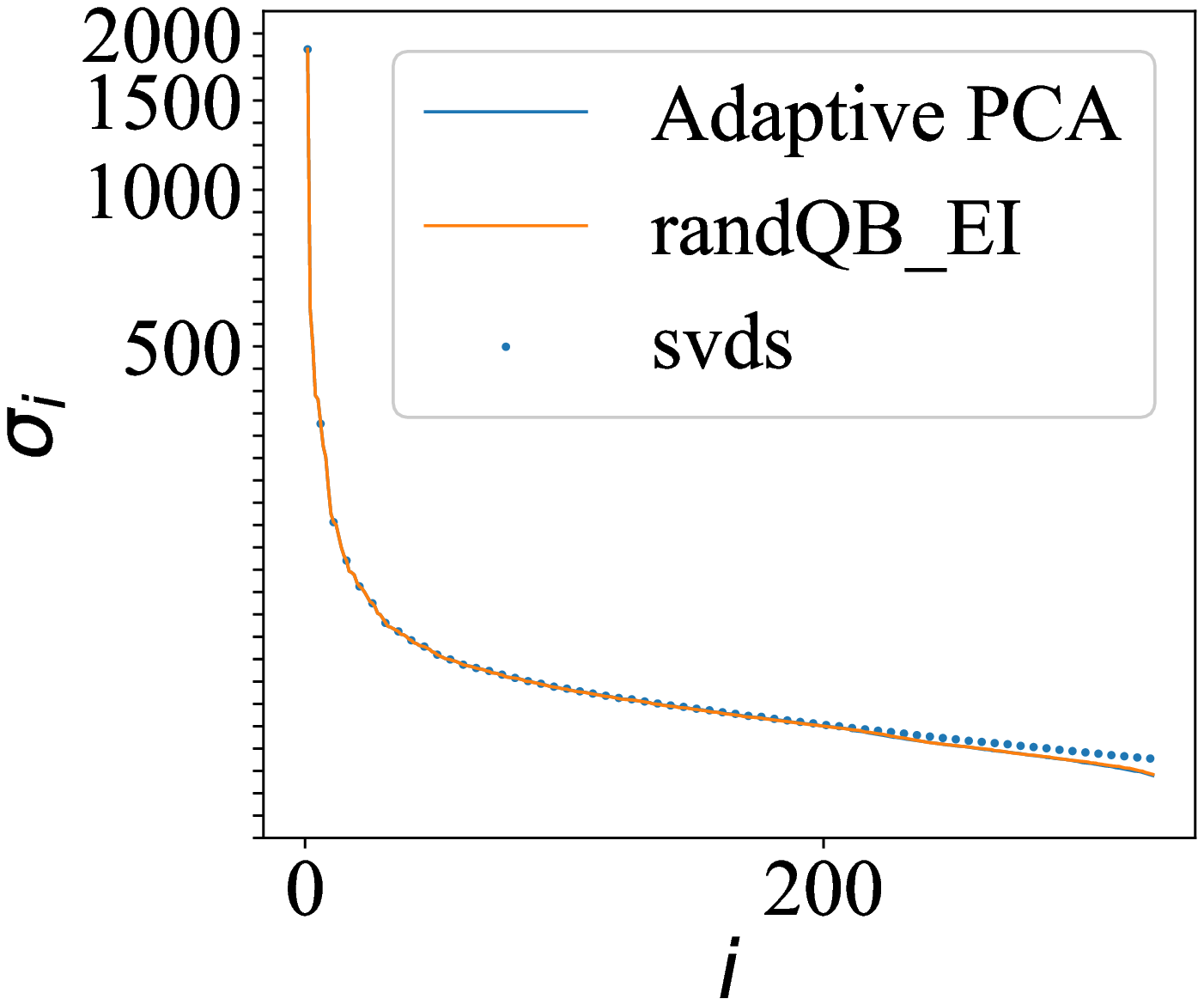}}
  \subfigure[BookCrossing] {\includegraphics[width=1.7in]{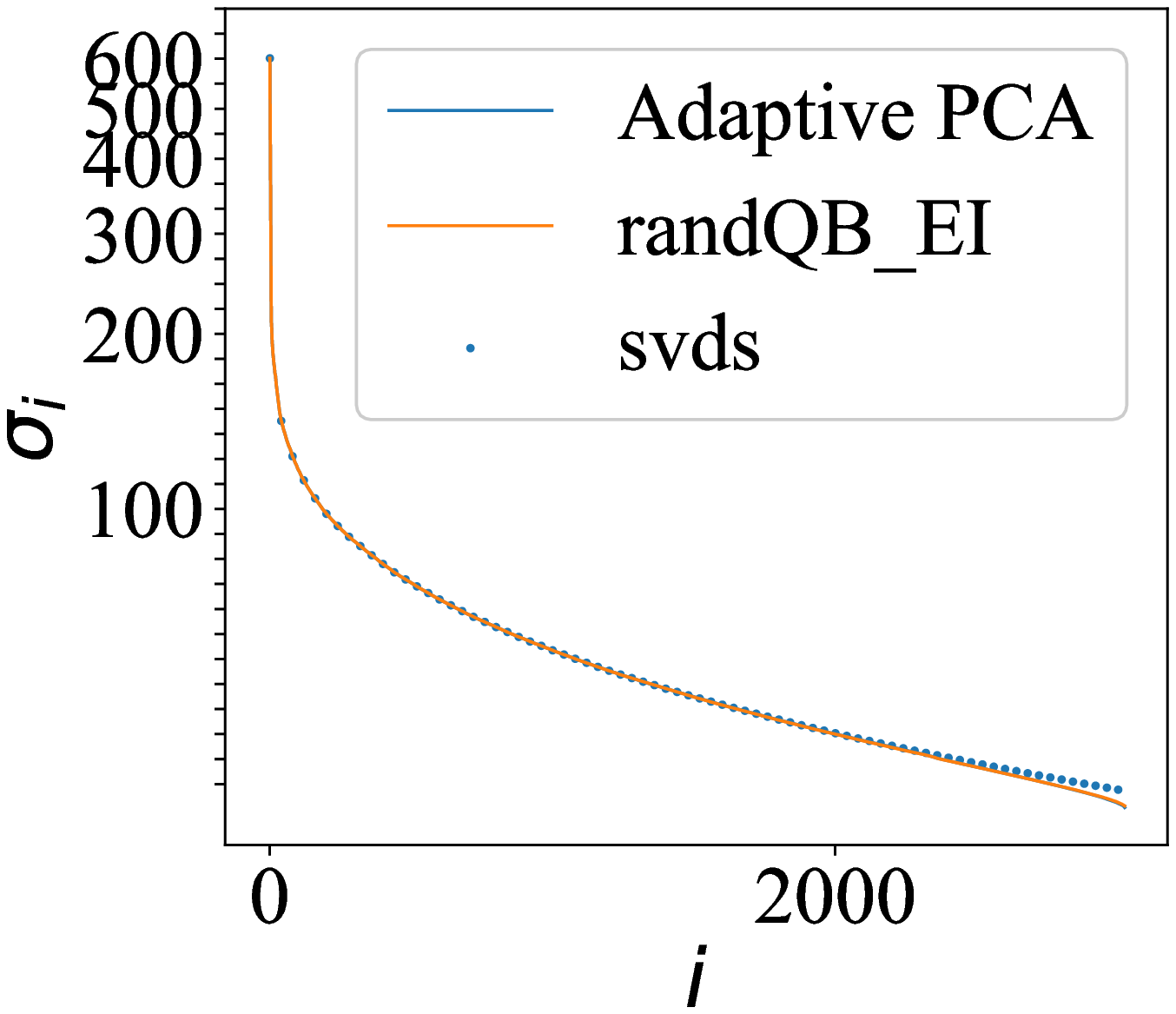}}
  \subfigure[MovieLens-20M] {\includegraphics[width=1.7in]{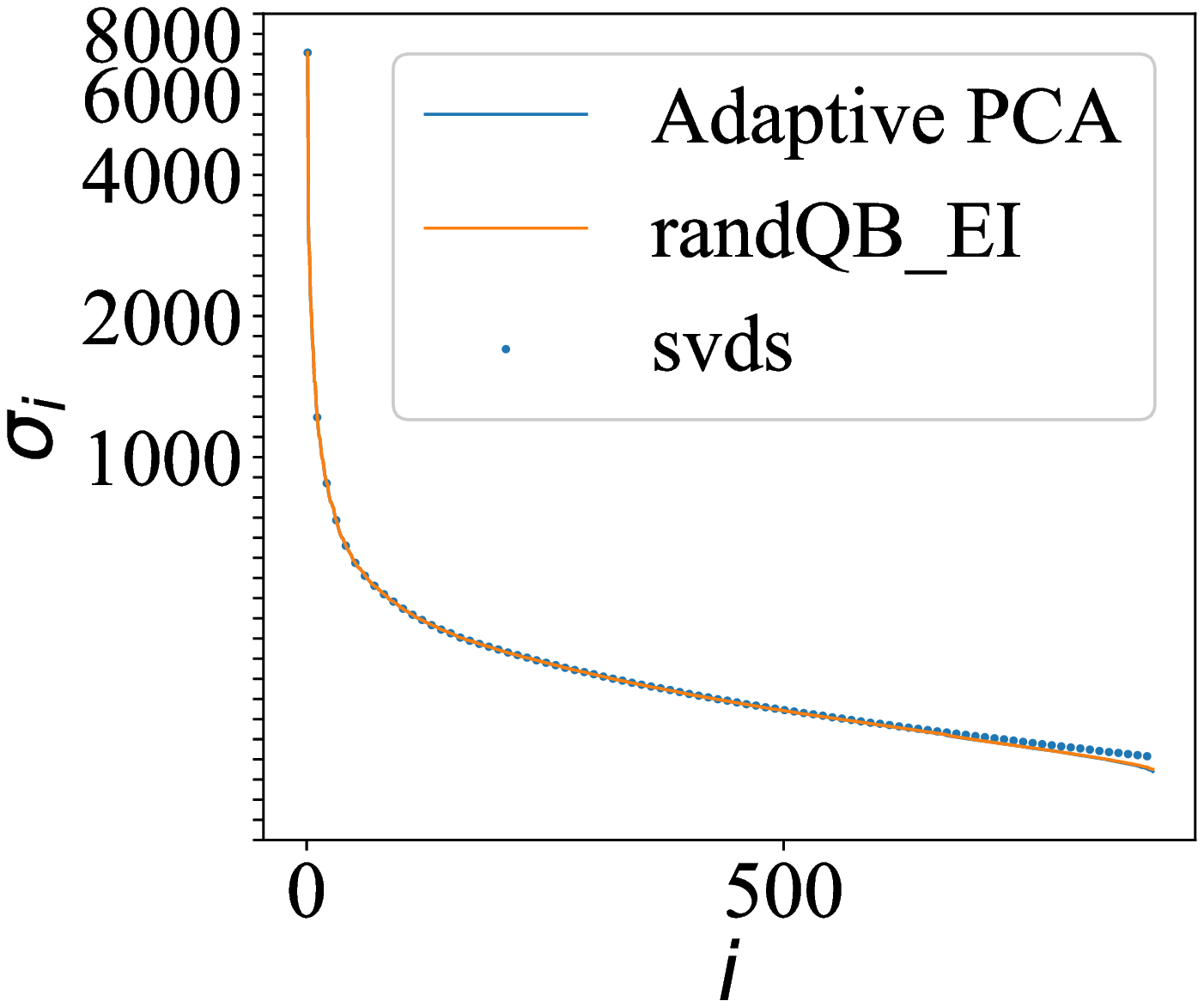}}
  \caption{The computed singular values for different matrices, showing the accuracy of proposed fast adaptive PCA framework for sparse data.}
  \label {Fig1}
\end{figure}
we see that the results of adaptive PCA framework are indistinguishable from those of \texttt{randQB\_EI}, while the both match those from \texttt{svds} very well.
Notice for two cases up to 883 and 3014 singular values are computed, and our adaptive PCA framework  still exhibits very good accuracy.


Then, we test the fast adaptive PCA with the termination criterion of approximation error.  The matrix approximation error in Frobenius norm is checked so that it can be compared with the \texttt{randQB\_EI} algorithm \cite{yu2018efficient}, i.e. Alg. 2. 
The relative error tolerance is set $\varepsilon=0.5$. 
The results are also compared with those from \texttt{svds}, as listed in Table \ref{svd}.
Because \texttt{svds} cannot adaptively determine the $k$ value, we just set the same $k$ as our approach and then record its computational time. From the table we see that, our adaptive PCA framework runs faster than the \texttt{randQB\_EI} and \texttt{svds}, with the largest speedup of \textbf{2.8X} and \textbf{6.7X} respectively. As for the determined $k$ value, the proposed algorithm mostly results to same $k$ values as \texttt{randQB\_EI} algorithm, which are just slightly larger than the optimal ranks $k^*$. Notice that $k^*$ cannot be directly outputted by \texttt{svds}, and we have to run it with a sufficiently larger $k$ and then determine $k^*$ by processing the computed leading singular values. In this meaning, the advantage of our fast adaptive PCA over  \texttt{svds} is even larger.

The memory cost of the  adaptive PCA  is the same as \texttt{randQB\_EI}, just a little bit larger than that of \texttt{svds}. Take the largest case MovieLens-20M as an example. The former costs 1.4 GB memory while \texttt{svds} consumes 1.2 GB memory.

\begin{table*}[t]
	\centering
	\caption{The results of proposed fast adaptive PCA framework (Alg. 3) with Frobenius-norm approximation error $\varepsilon=0.5$, and the comparison with \texttt{randQB\_EI} (Alg. 2) and \texttt{svds}.}
		\begin{threeparttable}
		\begin{tabular}{ccccccccccc}
			\toprule
			\multirow{2}{*}{Matrix} &
            \multicolumn{2}{c}{\texttt{randQB\_EI}(Alg. 2)} & & \multicolumn{2}{c}{\texttt{svds}}& &\multicolumn{4}{c}{Proposed Fast Adaptive PCA} \\
            \cmidrule(r){2-3}
            \cmidrule(r){5-6}
            \cmidrule(r){8-11}
            & time (s) & $k$ & & time (s) & $k^*$ & & time (s) & $k$ & Sp1 & Sp2\\
			\midrule
			MovieLens-100K&3.83&118& &3.0&117& &{\bf 1.75}&118 & 1.8 & 1.7\\
			hetrec2011&40.3&328& &71.8&325& &{\bf 29.1}&327 & 1.4 & 2.5\\
			BookCrossing&2739&3012& &6758&3004& &{\bf 1004}&3014 & {\bf 2.8} & {\bf 6.7}\\
			MovieLens-20M&1476&883& &3722&879& &{\bf 704}&883 & 2.1 & 5.3\\
			\bottomrule
		\end{tabular}
\begin{tablenotes}
    \small 
      \item[]$k^*$ is the optimal rank to truncate SVD to meet the approximation criterion. Sp1 and Sp2 are the speedup ratios to \texttt{randQB\_EI} and \texttt{svds}, respectively.
      \end{tablenotes}
	\end{threeparttable}		
	\label{svd}
\end{table*}

\subsection{The Proposed Model-Based Collaborative Filtering}
In this subsection, we validate the proposed model-based CF approach with automatically determined number of latent factors (Alg. 4 + Alg. 5). For each rating matrix, we split its  nonzero elements (known ratings) and obtain a training set, a validation set and a test set.  The training set is the input of the CF approaches. The validation set provides the measure of prediction accuracy to Alg. 5. The test set is for evaluating the accuracy of the proposed CF approach and its counterparts. In the experiment, for each matrix we randomly sample 90\% ratings to form the training set, 5\% ratings to form the validation set, and 5\% ratings for the test set. 

We first evaluate the impact of the dimensionality parameter $k$ (i.e. the number of latent factors) on the prediction accuracy. Fig. \ref{mae_k} shows the curves of MAE on validation set and test set of MovieLens-20M and the 
corresponding computational time of the proposed CF approach with successively augmented latent factors. The MAE drops quickly when $k$ is within 100, and then gradually converges to the minimal MAE. While $k$ increases further, the MAE slightly increases. From the figure we see that the minimal MAE occurs at close $k$ values for the validate set and test set. This means using the validate set in Alg. 5 can well model the trend of prediction accuracy on the test set. As 
\begin{figure}[h]     
  \centering
  \includegraphics[width=2.8in]{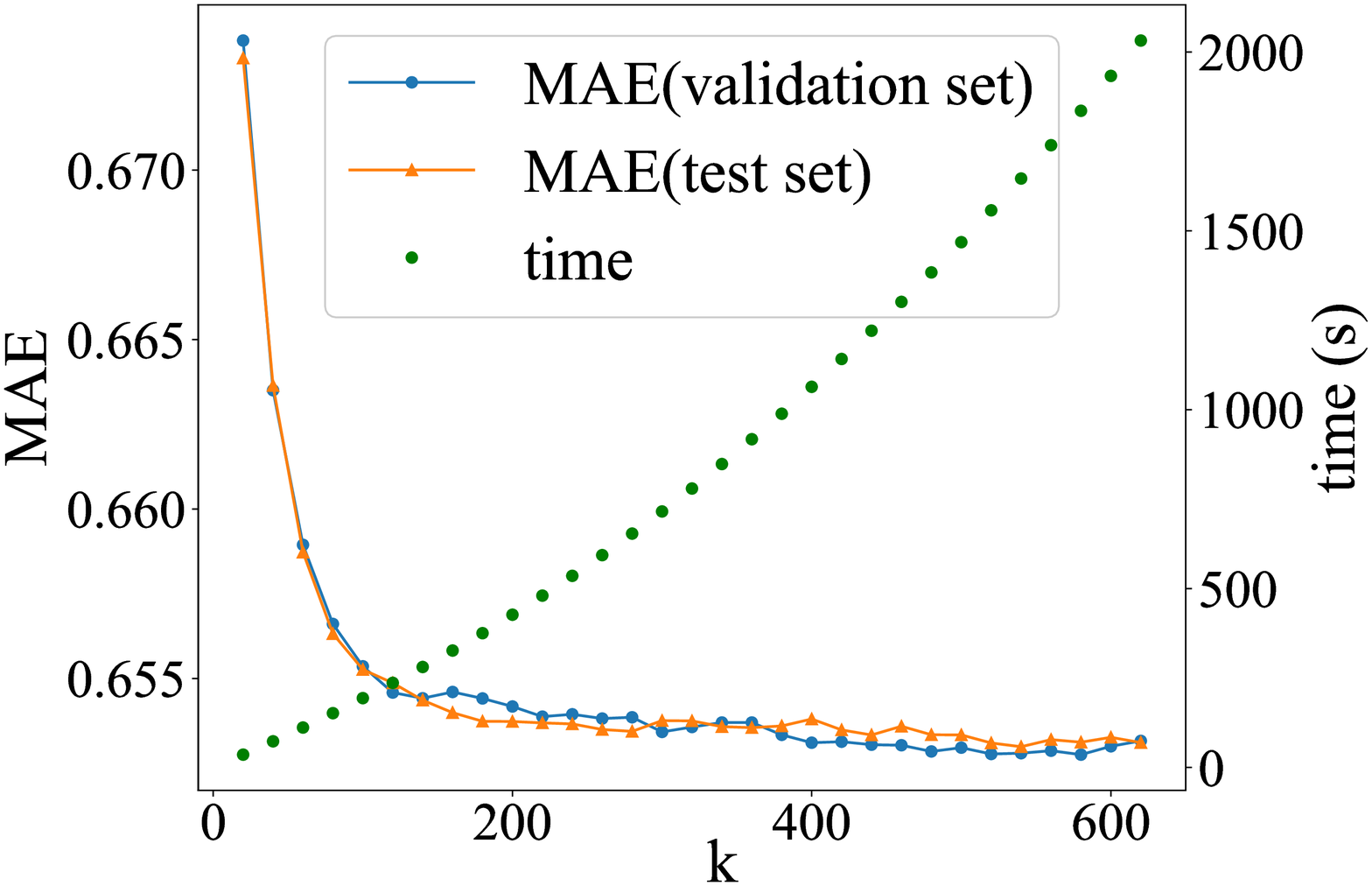}
\caption{The MAE on validation set and test set, and the corresponding computational time of the $k$-adpative rPCA based rating prediction vs. parameter $k$. The minimal MAE on validation set occurs when $k=520$, and on test set when $k=540$. The test data is from MoiveLens-20M.}
\label {mae_k}
\end{figure}
for the computational time, the curves exhibits an approximately linear dependence on $k$. It validates the efficiency of the incremental latent factor generation approach in Alg. 5.



Then, we compare the proposed approach with the SVD model based CF \cite{sarwar2000application}, the RMF approach \cite{koren2009matrix}, libMF \cite{libmf}, the fast SVT algorithm  \cite{feng2018faster} and NeuMF \cite{he2017neural}. The SVD model or RMF based approaches need setting a suitable $k$ value. In the experiment, it is set the same as that determined by the proposed approach, which corresponds a minimal prediction error on the validation set.
This ensures fair comparison and the best performance of them as well. LibMF is an improved version of RMF based approach. 
For them we have tuned the regularization parameter $\lambda$ and learning rate $\gamma$ to obtain the best accuracy of prediction. The resulted setting is $\lambda=0.05$ and  $\gamma=0.01$.
The fast SVT algorithm is a matrix completion algorithm which solves the matrix rank minimization problem and employs randomized SVD techniques to make acceleration. For the fast SVT algorithm, a parameter of tolerance is set 0.2. The NeuMF approach is a deep-learning-based CF approach. We have implemented it and obtained similar performance as that reported in \cite{he2017neural}. 

The experimental results are listed in Table \ref{recommand_result}. The MAE and CPU time are listed for all methods. 
\begin{table*}[bp]
	\centering
	\caption{The errors of rating prediction  and corresponding computational time (in unit of second) with different CF approaches.}
	\begin{threeparttable}
		\begin{tabular}{@{~}c@{~}c@{~}c@{~}c@{~}c@{~}c@{~}c@{~}c@{~}c@{~}c@{~}c@{~}c@{~}c@{~}c@{~}c@{~}c@{~}c@{~}c@{~}c@{~}c@{~}}
			\toprule
			\multirow{2}{*}{Dataset} &
            \multicolumn{3}{c}{Proposed Approach} & & \multicolumn{3}{c}{RMF Model} & &\multicolumn{2}{c}{LibMF}&&\multicolumn{2}{c}{SVD Model}& &\multicolumn{2}{c}{Fast SVT}& &\multicolumn{2}{c}{NeuMF}\\
            \cmidrule(r){2-4}
            \cmidrule(r){6-8}
            \cmidrule(r){10-11}
            \cmidrule(r){13-14}
            \cmidrule(r){16-17}
            \cmidrule(r){19-20}
            & MAE & time(s) & $k$ & & MAE & time(s) & time$^1$ & & MAE & time (s) & & MAE & time(s)&&MAE&time(s)&&MAE&time(s)\\
			\midrule
			MovieLens-100K&\textbf{0.661}&22.6&220 &&0.706&1510&NA &&0.865 &52.4&&0.754&60.6 &&0.744&459&&\underline{0.630}&NA$^2$\\
			hetrec2011&0.577&60.7&60 &&\textbf{0.577}&6542&1662 &&0.634&431&&0.702&106 &&0.621&1269&&\underline{0.559}&NA$^2$\\
			BookCrossing&1.42&107&160 &&\textbf{1.33}&4488&804 &&1.51&445&&1.58&1258 &&1.60&3552&&\underline{1.25}&NA$^2$\\
			MovieLens-20M&0.653&2114&520 &&\textbf{0.632}& NA$^2$&20154 &&0.758&10347&&NA$^2$&NA$^2$ &&0.669&6080&&\underline{0.603}&NA$^2$\\
			\bottomrule
		\end{tabular}
     \begin{tablenotes}
       \small
\item[] $^1$ The time for reaching same accuracy as the proposed approach. 
$^2$ Costs more than 40000 seconds, or causes out-of-memory error.
\end{tablenotes}
   \end{threeparttable}
	\label{recommand_result}
\end{table*}
Obvious, the NeuMF achieves the best accuracy (but the advantage is mostly less than 5\%). However, it costs multiple tens of thousands seconds for training an epoch, and usually 20$\sim$50 epoches are needed to attain the best/converged MAE. The runtime of NeuMF is much longer than the other model-based CF approaches, making it unfair to compare them together. 

Excluding the results of NeuMF, the most accurate results are indicated with bold numbers, which are from RMF model and the proposed approach.
From the results we see that the proposed approach outperforms the SVD based model, LibFM, and the fast SVT algorithm both on accuracy and computational time. The speedup ratio ranges from 2.3 to  several tens. For the largest data MovieLens-20M, the SVD based model failed due to excessive memory cost.

The RMF approach implemented by us is more than 10X slower than LibMF, but  produces more accurate result with smaller MAE. Notice that its computational time is tens to hundreds times longer than ours. To make fair comparison, the costed time for the RMF approach to reach the same accuracy as the proposed approach is also listed in the table (denoted as time$^1$). Based on it we see that the latter is \textbf{8X} to \textbf{27X} faster. For the largest MovieLens-20M case, the curve of MAE along the computational time of RMF approach is shown in Fig. 3, along with the result of the proposed approach. The figure clearly demonstrates that 
the proposed approach is about \textbf{10X} (i.e. 20154s/2114s) faster then the RMF based CF approach while producing same accuracy. And, the accuracy of the former is just a little bit worse than the converged result of the latter.
\begin{figure}[h]     
  \centering
  \includegraphics[width=2.7in]{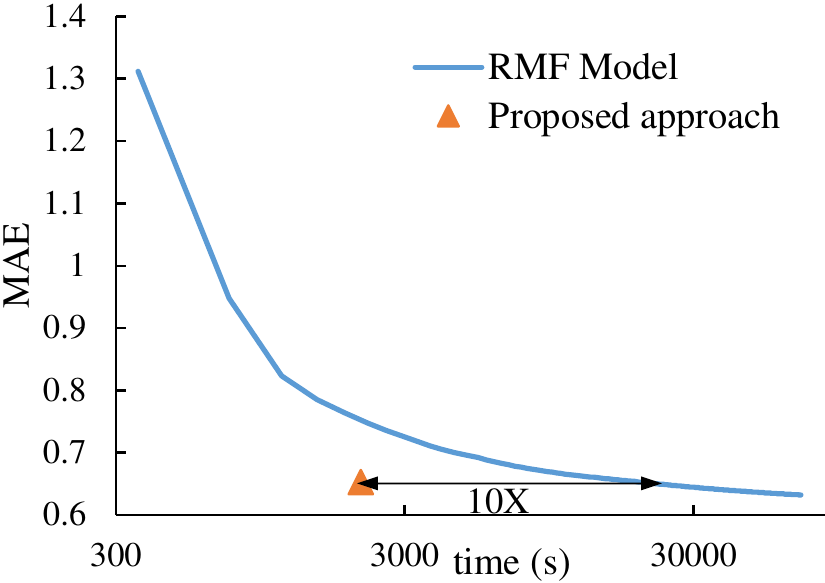}
\caption{The MAE of the RMF model based collaborative filtering approach on MovieLens-20M vs. its computational time. The comparison with the proposed approach shows that the latter is 10X faster for achieving same accuracy. }
\end{figure}

It should be pointed out that, the proposed approach also enables choosing a smaller $k$ near the turning point after which the prediction error on validation set declines slowly. For the case shown in Fig. 2, this smaller $k$ may be 200. This could trade off a little bit accuracy for shorter runtime.

\section{Conclusions}

In this work, a model-based collaborative filtering approach with automatically determined latent factors is proposed. It includes a fast adaptive PCA framework for sparse data and a novel termination mechanism for incremental generation of latent factors. The approach is parameter free, and achieves better accuracy and effectiveness than other model-based CF approaches and the fast SVT algorithm. It also adapts to large sparse data in real recommendation problems. For reproducibility, the codes and test data in this work will be shared on GitHub (https://github.com/xindubawukong222/K-Adaptive-CF).

Although the model-based CF approach is not as accurate as the deep-learning-based CF approach, the former is much more computationally
efficient than the latter. This work also reveals that the difference of prediction error between them is marginal (mostly less than 5\%). So, in some scenarios (without GPUs or demanding environment-friendly)
the proposed model-based CF approach is useful and advantageous.


\end{document}